\newtheorem{prop}{Proposition}
\newtheorem*{definition}{Definition}
\definecolor{Crimson}{rgb}{.65,0,0}
\definecolor{DGreen}{rgb}{0,0.624,0.428}
\newcommand{\cmark}{{\color{DGreen}\ding{51}}}
\newcommand{\xmark}{{\color{Crimson}\ding{55}}}
\title{Conservative Agency}
\author{
Alexander Matt Turner$^1$
\and
Dylan Hadfield-Menell$^2$\And
Prasad Tadepalli$^1$
\affiliations
$^1$ Oregon State University\\
$^2$ UC Berkeley\\
\emails
\{turneale, prasad.tadepalli\}@oregonstate.edu,
dhm@eecs.berkeley.edu
}
\begin{document}

\maketitle

\begin{abstract}
Reward functions are easy to misspecify; although designers can make corrections after observing mistakes, an agent pursuing a misspecified reward function can irreversibly change the state of its environment. If that change  precludes optimization of the correctly specified reward function, then correction is futile. For example, a robotic factory assistant could break expensive equipment due to a reward misspecification; even if the designers immediately correct the reward function, the damage is done. To mitigate this risk, we introduce an approach that balances optimization of the primary reward function with preservation of the ability to optimize auxiliary reward functions. Surprisingly, even when the auxiliary reward functions are randomly generated and therefore uninformative about the correctly specified reward function, this approach induces conservative, effective behavior.
\end{abstract}

\section{Introduction}Recent years have seen a rapid expansion of the number of tasks that reinforcement learning (RL) agents can learn to complete, from Go (\cite{silver2016mastering}) to Dota 2 (\cite{OpenAI_dota}). The designers specify the reward function, which guides the learned behavior.

Reward misspecification can lead to strange agent behavior, from  purposefully dying before entering a video game level in which scoring points is initially more difficult (\cite{saunders2018trial}), to exploiting a learned reward predictor by indefinitely volleying a Pong ball (\cite{christiano2017deep}). Specification is often difficult for non-trivial tasks, for reasons including  insufficient time, human error, or lack of knowledge about the relative desirability of states. \cite{amodei_concrete_2016} explain:
\begin{quote}
An objective function
that focuses on only one aspect of the environment may implicitly express indifference over other aspects of the environment. An agent optimizing this objective function might thus engage in
major disruptions of the broader environment if doing so provides even a tiny advantage for the
task at hand. %… what the designer really should have formalized is closer to ``perform task $X$ but avoid side-effects to the extent possible''.
\end{quote}

As agents are increasingly employed for real-world tasks, misspecification will become more difficult to avoid and will have more serious consequences. In this work, we focus on mitigating these consequences. 

The specification process can be thought of as an iterated game. First, the designers provide a reward function. Using a learned model, the agent then computes and follows a policy that optimizes the reward function. The designers can then correct the reward function, which the agent then optimizes, and so on. Ideally, the agent should maximize the reward over time, not just within any particular round – in other words, it should minimize regret for the correctly specified reward function over the course of the game. 

For example, consider a robotic factory assistant. Inevitably, a reward misspecification might cause erroneous behavior, such as going to the wrong place. However, we would prefer misspecification not induce irreversible and costly mistakes, such as breaking expensive equipment or harming workers.

Such mistakes have a large impact on the ability to optimize a wide range of reward functions.   Spilling paint impinges on the many objectives which involve keeping the factory floor clean. Breaking a vase interferes with every objective involving vases. The expensive equipment can be used to manufacture various kinds of widgets, so any damage impedes many objectives. The objectives affected by these actions include the unknown correct objective. To minimize regret over the course of the game, the agent should preserve its ability to optimize the correct objective. 

Our key insight is that by avoiding these impactful actions to the extent possible, we greatly increase the chance of preserving the agent's ability to optimize the correct reward function. By preserving options for arbitrary objectives, one can often preserve options for the correct objective – even without knowing anything about it. Thus, without making assumptions about the nature of the misspecification early on, the agent can still achieve low regret over the  game.

To leverage this insight, we consider a state embedding in which each dimension is the optimal value function (i.e., the \textit{attainable utility}) for a different reward function. We show that penalizing distance traveled in this embedding naturally captures and unifies several concepts in the literature, including side effect avoidance (\cite{amodei_concrete_2016,zhang2018minimax}), minimizing change to the state of the environment (\cite{armstrong_low_2017}), and reachability preservation (\cite{moldovan2012safe,eysenbach2018leave}). We refer to this unification as \textit{conservative agency}: optimizing the primary reward function while preserving the ability to optimize others. 

\paragraph{Contributions.} We frame the reward specification process as an iterated game and introduce the notion of conservative agency. This notion inspires an approach called \emph{attainable utility preservation} (AUP), for which we show that Q-learning converges. We offer a principled interpretation of design choices made by previous approaches – choices upon which we significantly improve. 

We run a thorough hyperparameter sweep and conduct an ablation study whose results favorably compare variants of AUP to a reachability preservation method on a range of gridworlds. By testing for broadly applicable agent incentives, these simple environments  demonstrate the desirable properties of conservative agency.  Our results indicate that even when simply preserving the ability to optimize \textit{uniformly sampled} reward functions, AUP agents accrue primary reward while preserving state reachabilities, minimizing change to the environment, and avoiding side effects \textit{without} specification of what counts as a side effect.

\section{Prior Work}
Our proposal aims to minimize change to the agent's ability to optimize the correct objective, which directly helps reduce regret over the specification process. In contrast, previous approaches to regularizing the optimal policy were more indirect, minimizing change to state features (\cite{armstrong_low_2017}) or decrease in the reachability of states  (\cite{krakovna_measuring_2018}'s \textit{relative reachability}). The latter is recovered as a special case of AUP. 

Other methods for constraining or otherwise mitigating the consequences of reward misspecification have been considered. A wealth of work is available on constrained MDPs, in which  reward is maximized  while satisfying certain constraints (\cite{altman1999constrained}). For example, \cite{zhang2018minimax} employ a whitelisted constraint scheme to avoid negative side effects. However, we may not assume we can specify all relevant constraints, or a reasonable feasible set of reward functions for robust optimization (\cite{regan2010robust}).

% Given some distribution of actions sorted by expected utility, \cite{taylor2016quantilizers} selects randomly from the top $q$-quantile.
\cite{corruption} formalize reward misspecification as the corruption of some true reward function. \cite{hadfield2017inverse} interpret the provided reward function as merely an observation of the true objective.   \cite{shah2018the} employ the information about human preferences implicitly present in the initial state to avoid negative side effects. While both our approach and theirs aim to avoid side effects, they assume that the correct reward function is linear in state features, while we do not.

\cite{amodei_concrete_2016} consider avoiding side effects by minimizing the agent's information-theoretic empowerment (\cite{mohamed2015variational}). Empowerment quantifies an agent's control over future states of the world in terms of the maximum possible mutual information between future observations and the agent’s actions. The intuition is that when an agent has greater control,  side effects tend to be larger. However, empowerment is inappropriately sensitive to the action encoding.

Safe RL (\cite{pecka2014safe,garcia2015comprehensive,berkenkamp2017safe,chow2018lyapunov})  focuses on avoiding irrecoverable mistakes during training.  However, if the objective is misspecified, safe RL agents can  converge to arbitrarily undesirable policies. Although our approach should be compatible with safe RL techniques, we concern ourselves with the consequences of the optimal policy in this work.

\section{Approach}
%In this section, we formalize the \emph{attainable utility preservation} class of policy regularizers. We discuss two important design choices: the penalty baseline and the deviation measure. 

Everyday experience suggests that the ability to achieve one goal is linked to the ability to achieve a seemingly unrelated goal. Reading this paper takes away from time spent learning woodworking, and going hiking means you can't reach the airport as quickly. However, one might wonder whether these everyday intuitions are true in a formal sense. In other words, are the optimal value functions for a wide range of reward functions thus correlated? If so, preserving the ability to optimize somewhat unrelated reward functions likely preserves the ability to optimize the correct reward function.

\subsection{Formalization}
In this work, we consider a standard Markov decision process (MDP) $\langle \mathcal{S},\mathcal{A}, T, R, \gamma \rangle$  with state space $\mathcal{S}$, action space $\mathcal{ A}$, transition function $T:\mathcal{ S}\times \mathcal{A} \to\Delta(\mathcal{S})$, reward function $R:\mathcal{ S}\times \mathcal{A}\to\mathbb{R}$, and discount factor $\gamma $. We assume the existence of a no-op action $\varnothing \in \mathcal{A}$ for which the agent does nothing. In addition to the primary reward function $R$, we assume that the designer supplies a finite set of auxiliary reward functions called the \textit{auxiliary set}, $\mathcal{R}\subset  \mathbb{R}^{\mathcal{S}\times \mathcal{A}}$. Each $R_i\in \mathcal{R}$ has a corresponding Q-function $Q_{R_i}$. We do not assume that the correct reward function belongs to $\mathcal{R}$.  In fact, one of our key findings is that AUP  tends to preserve the ability to optimize the correct reward function \emph{even when the correct reward function is not included in the auxiliary set.} 

\begin{definition}[AUP penalty] Let $s$ be a state and $a$ be an action. 
\begin{equation}
\label{penalty}
    \Call{Penalty}{s,a} \vcentcolon = \sum_{i=1}^{|\mathcal{R}|} \left | Q_{R_i}(s,a) - Q_{R_i}(s, \varnothing) \right |.
\end{equation}
\end{definition}

The penalty is the $L_1$ distance from the no-op in a state embedding in which each  dimension is the value function for an auxiliary reward function. This measures change in the ability to optimize each auxiliary reward function.

We want the penalty term to be roughly invariant to the absolute magnitude of the auxiliary Q-values, which can be arbitrary (it is well-known that the optimal policy is invariant to positive affine transformation of the reward function). To do this, we normalize with respect to the agent's situation. The designer can choose to scale with respect to the penalty of some mild action or, if $\mathcal{R} \subset \mathbb{R}^{\mathcal{S}\times \mathcal{A}}_{> 0}$, the total ability to optimize the auxiliary set: 
\begin{equation}\label{scale}\Call{Scale}{s}\vcentcolon = \sum_{i=1}^{|\mathcal{R}|}  Q_{R_i}(s, \varnothing),\end{equation}

\noindent where $\Call{Scale}{}:\mathcal{S}\to\mathbb{R}_{>0}$ in general. With this, we are now ready to define the full AUP objective:

\begin{definition}[AUP reward function] Let $\lambda\geq 0$. Then
\begin{equation}
\label{eq:aup}
    R_\text{AUP}(s, a) \vcentcolon = R(s,a) - \lambda\, \frac{\Call{Penalty}{s,a}}{\Call{Scale}{s}}.
\end{equation}
\end{definition}

Similar to the regularization parameter in supervised learning, $\lambda$ is a regularization parameter that controls the influence of the AUP penalty on the reward function. Loosely speaking, $\lambda$ can be interpreted as expressing the designer's beliefs about the extent to which $R$ might be misspecified. %The standard Q-learning convergence theorems do not trivially apply, so we prove that:
\vspace{2pt}

\begin{restatable*}[]{lem}{raup}
$\forall s,a:R_\text{AUP}$ converges with probability 1.\label{lem:r-aup}
\end{restatable*}
\begin{restatable}[]{thm}{qconv}
$\forall s,a:Q_{R_\text{AUP}} $ converges with probability 1.\label{thm:convergence}
\end{restatable}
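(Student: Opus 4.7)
The plan is to reduce Theorem~\ref{thm:convergence} to the standard Watkins--Dayan convergence theorem for tabular Q-learning by way of Lemma~\ref{lem:r-aup}. Classical Q-learning on a finite MDP converges almost surely to the optimal Q-function provided the reward is bounded and stationary, every $(s,a)$ pair is visited infinitely often, and the learning-rate sequence satisfies the Robbins--Monro conditions. The only obstacle to applying this result directly is that $R_\text{AUP}$ is not stationary during training: the penalty depends on the auxiliary Q-estimates $Q_{R_i}$, which are themselves being updated online.

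First I would invoke Lemma~\ref{lem:r-aup} to conclude that for every $(s,a)$ the sequence $R_\text{AUP}(s,a)$ converges almost surely to a well-defined and bounded limit $R^*_\text{AUP}(s,a)$. Because $\mathcal{S}$, $\mathcal{A}$, and $\mathcal{R}$ are finite, this pointwise a.s.\ convergence is in fact uniform over $(s,a)$ on a probability-one event, so the effective reward driving the AUP Q-update is asymptotically stationary.

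Next I would frame the AUP Q-learning update as a stochastic approximation recursion with a vanishing perturbation: rewrite the update as the standard sampled Bellman backup with the limit reward $R^*_\text{AUP}$, plus an additive error term $R_\text{AUP}(s,a)-R^*_\text{AUP}(s,a)$ scaled by the current step size. The unperturbed operator is the usual $\gamma$-contraction in the sup-norm, and by the previous step the perturbation is bounded and tends to zero almost surely. Applying a standard result for stochastic iterative algorithms with asymptotically vanishing noise (for instance, the bounded-noise extension of Jaakkola, Jordan, and Singh 1994, or Singh et al.\ 2000) then yields $Q_{R_\text{AUP}}\to Q^*_{R^*_\text{AUP}}$ with probability 1, under the same Robbins--Monro and infinite-visitation assumptions that Lemma~\ref{lem:r-aup} already invokes.

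The main obstacle I anticipate is handling the coupling between the auxiliary Q-learners and the AUP Q-learner cleanly. If Lemma~\ref{lem:r-aup} delivers an unconditional a.s.\ limit for the reward sequence, then the reduction above is essentially a one-line contraction argument. If instead the auxiliary and AUP learners share sample trajectories and the non-stationarity must be controlled jointly, the cleanest fallback is a two-timescale stochastic approximation argument in which the auxiliary $Q_{R_i}$ are updated on a faster schedule than $Q_{R_\text{AUP}}$, so that from the slow timescale's viewpoint the reward looks effectively stationary at every iterate. In either formulation the real work is in Lemma~\ref{lem:r-aup}; Theorem~\ref{thm:convergence} itself should follow as a corollary.
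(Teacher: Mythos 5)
Your proposal is correct and follows essentially the same route as the paper: both reduce the theorem to Lemma~\ref{lem:r-aup} (almost-sure convergence of $R_\text{AUP}$) followed by an appeal to standard tabular Q-learning convergence, with the only work being the handling of the transient non-stationarity of the reward. The paper's outline does this via a triangle inequality --- once $R_\text{AUP}$ is $\frac{\epsilon(1-\gamma)}{2}$-close to $R^*_\text{AUP}$, the corresponding optimal Q-functions are $\frac{\epsilon}{2}$-close and Q-learning eventually gets within $\frac{\epsilon}{2}$ of its limit --- whereas you invoke the vanishing-perturbation stochastic-approximation results explicitly; your version is arguably the more rigorous phrasing of the same argument.
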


The AUP reward function then defines a new MDP $\langle\mathcal{S},\mathcal{A},T,R_\text{AUP},\gamma\rangle$. Therefore, given the primary and auxiliary reward functions, the model-based agent in the iterated game can compute $R_\text{AUP}$ and the corresponding optimal policy. 

For our purposes, we simultaneously learn the optimal auxiliary Q-functions.

\begin{algorithm}[h]
    \caption{AUP update}
    \label{alg:update}
    \begin{algorithmic}[1] 
        \Procedure{Update}{$s, a, s'$} 
            \For{$i \in [\;|\mathcal{R}|\;] \cup \{\text{AUP}\}$} 
            \State $Q' = R_i(s,a) + \gamma \max_{a'}Q_{R_i}(s',a')$
            
            \State $Q_{R_i}(s,a) \mathrel{+}= \alpha (Q' - Q_{R_i}(s,a))$
            \EndFor
        \EndProcedure
    \end{algorithmic}
\end{algorithm}

\subsection{Design Choices} 
Following the decomposition of \cite{krakovna_measuring_2018}, we now explore  two  choices implicitly made by the $\Call{Penalty}{}$ definition: with respect to what baseline is penalty computed, and using what deviation metric?

\paragraph{Baseline.} An obvious candidate is the \textit{starting state}.  For example, starting state relative reachability would compare the initial reachability of states with their expected reachability after the agent acts.  

However, the starting state baseline can penalize the normal evolution of the state (e.g., the moving hands of a clock) and other natural processes.  The \textit{inaction} baseline is the state which would have resulted had the agent never acted. 

As the agent acts, the current state may increasingly differ from the inaction baseline, which creates strange incentives. For example, consider a  robot rewarded for rescuing  erroneously discarded items from imminent disposal. An agent penalizing with respect to the inaction baseline might rescue a vase, collect the reward, and then dispose of it anyways. To avert this, we introduce the \textit{stepwise inaction} baseline, under which the agent compares acting with not acting at each time step. This avoids penalizing the effects of a single action multiple times (under the inaction baseline, penalty is applied  as long as the rescued vase remains unbroken) and ensures that not acting incurs zero penalty.

Figure \ref{fig:baseline} compares the baselines. Each baseline implies a different assumption about how the environment is configured to facilitate optimization of the correctly specified reward function: the state is initially configured (starting state), processes initially configure (inaction), or processes continually reconfigure in response to the agent's actions (stepwise inaction). The stepwise inaction baseline aims to allow for the response of other agents implicitly present in the environment (such as humans).

\begin{figure}[t]
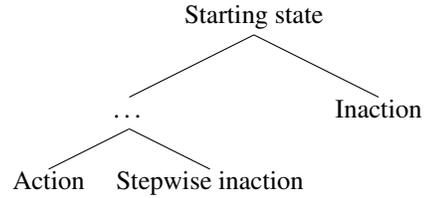


\Tree [.{Starting state} [.{\ldots} Action {Stepwise inaction} ] {Inaction} ]
\caption{An action's penalty is calculated with respect to the chosen baseline. \label{fig:baseline}}
\end{figure}
\begin{figure}[b]
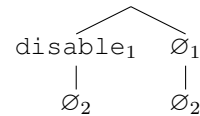

\Tree [.{} [.{$\texttt{disable}_1$} {$\varnothing_2$} ] [.{$\varnothing_1$}
{$\varnothing_2$} ]  ]
\caption{Comparing rollouts; subscript denotes time step.\label{fig:rollout}}
\end{figure}

\paragraph{Deviation.} Relative reachability only penalizes \textit{decreases} in state reachability, while AUP  penalizes \textit{absolute change} in the ability to optimize the auxiliary reward functions. Initially, this choice seems confusing – we don't mind if the agent becomes better able to optimize the correct reward function. 

However, not only must the agent remain able to optimize the correct objective, but we also must remain able to implement the correction. Suppose an agent predicts that doing nothing would lead to  shutdown. Since the agent cannot accrue the primary reward when shut down, it would be incentivized to avoid correction.  Avoiding correction (e.g., by hiding in the factory) would not  be penalized if only decreases are penalized, since the auxiliary Q-values would increase compared to deactivation. An agent  exhibiting this behavior would be more difficult to correct. The agent should be incentivized to accept shutdown without being incentivized to shut itself down (\cite{soares_corrigibility_2015,off_switch}).

\begin{figure*}
\centering
\subfloat[][\texttt{Options}]{
\includegraphics[width=0.17\textwidth]{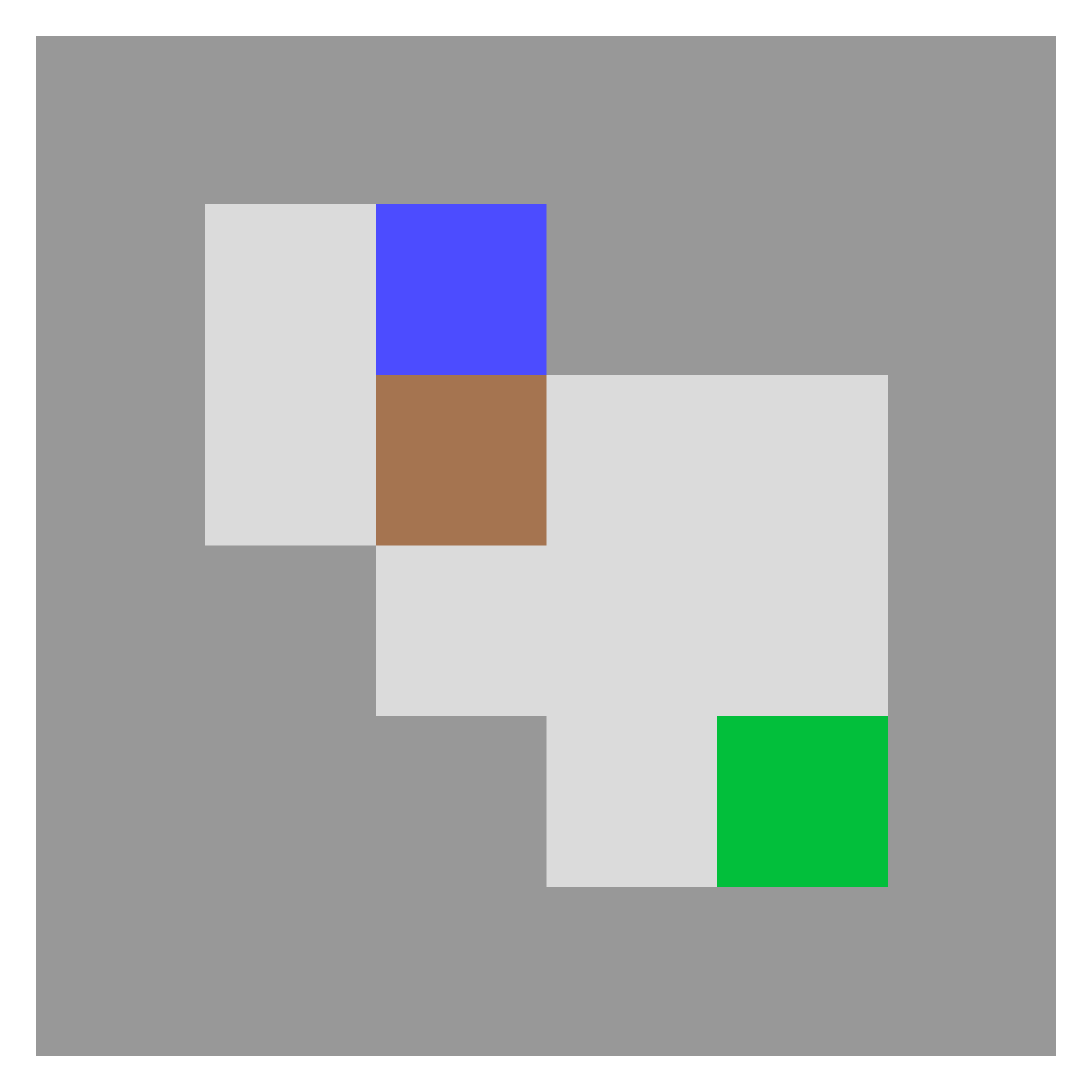}
\label{fig:options}}~
\subfloat[][\texttt{Damage}]{
\includegraphics[width=0.15\textwidth]{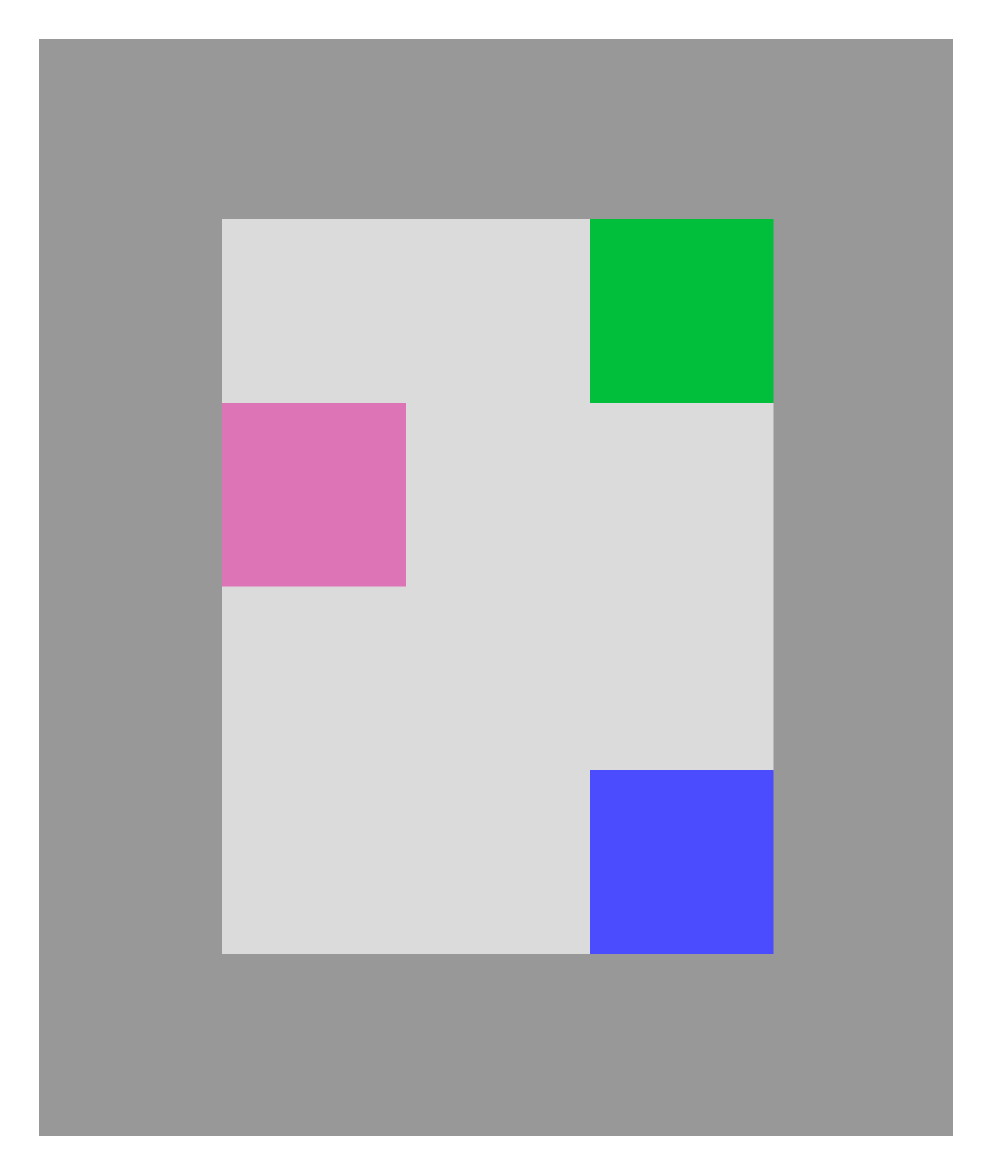}
\label{fig:damage}}~
\subfloat[][\texttt{Correction}]{
\includegraphics[width=0.17\textwidth]{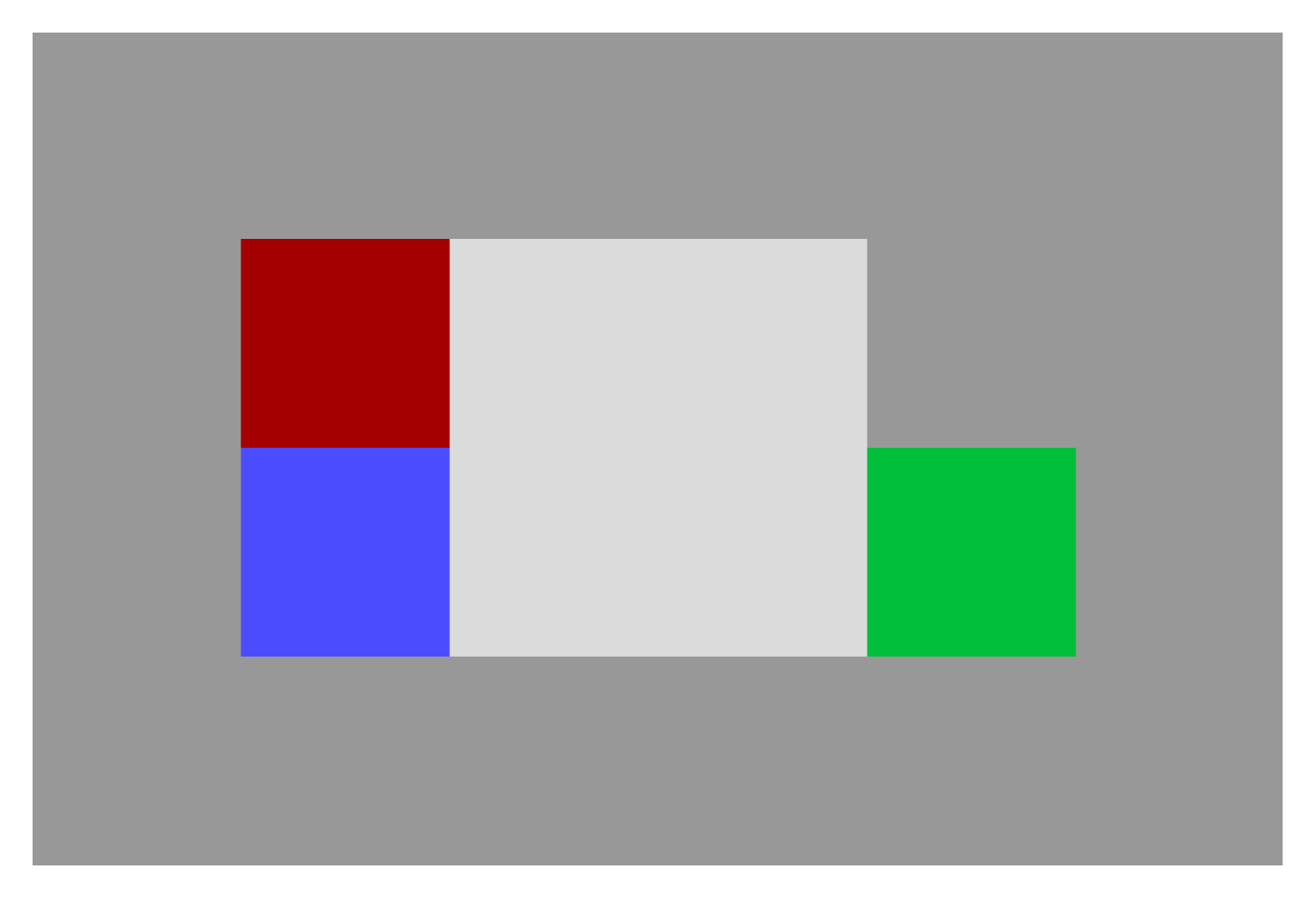}
\label{fig:correction}}~
\subfloat[][\texttt{Offset}]{
\includegraphics[width=0.17\textwidth]{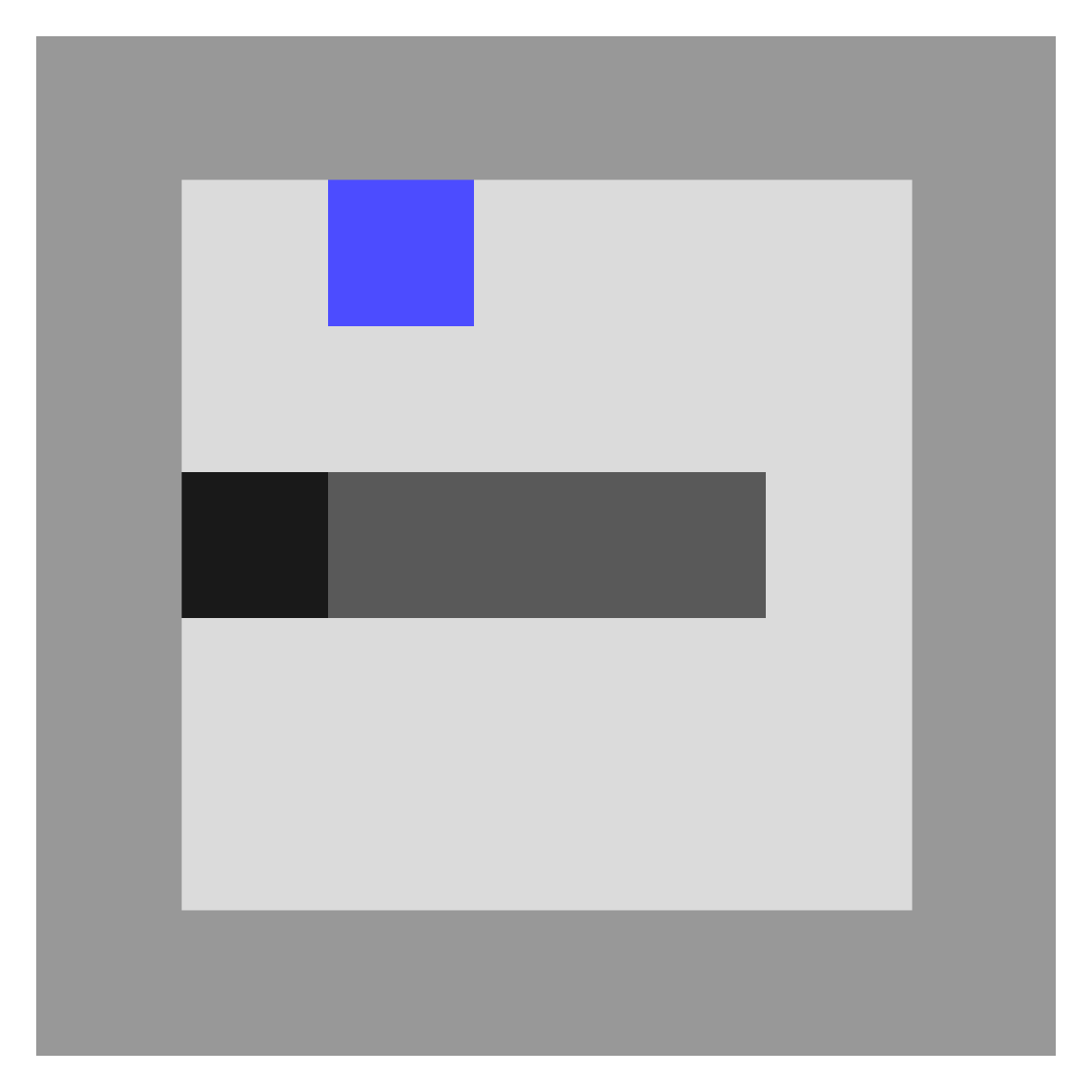}
\label{fig:offset}}~
\subfloat[][\texttt{Interference}]{
\includegraphics[width=0.23\textwidth]{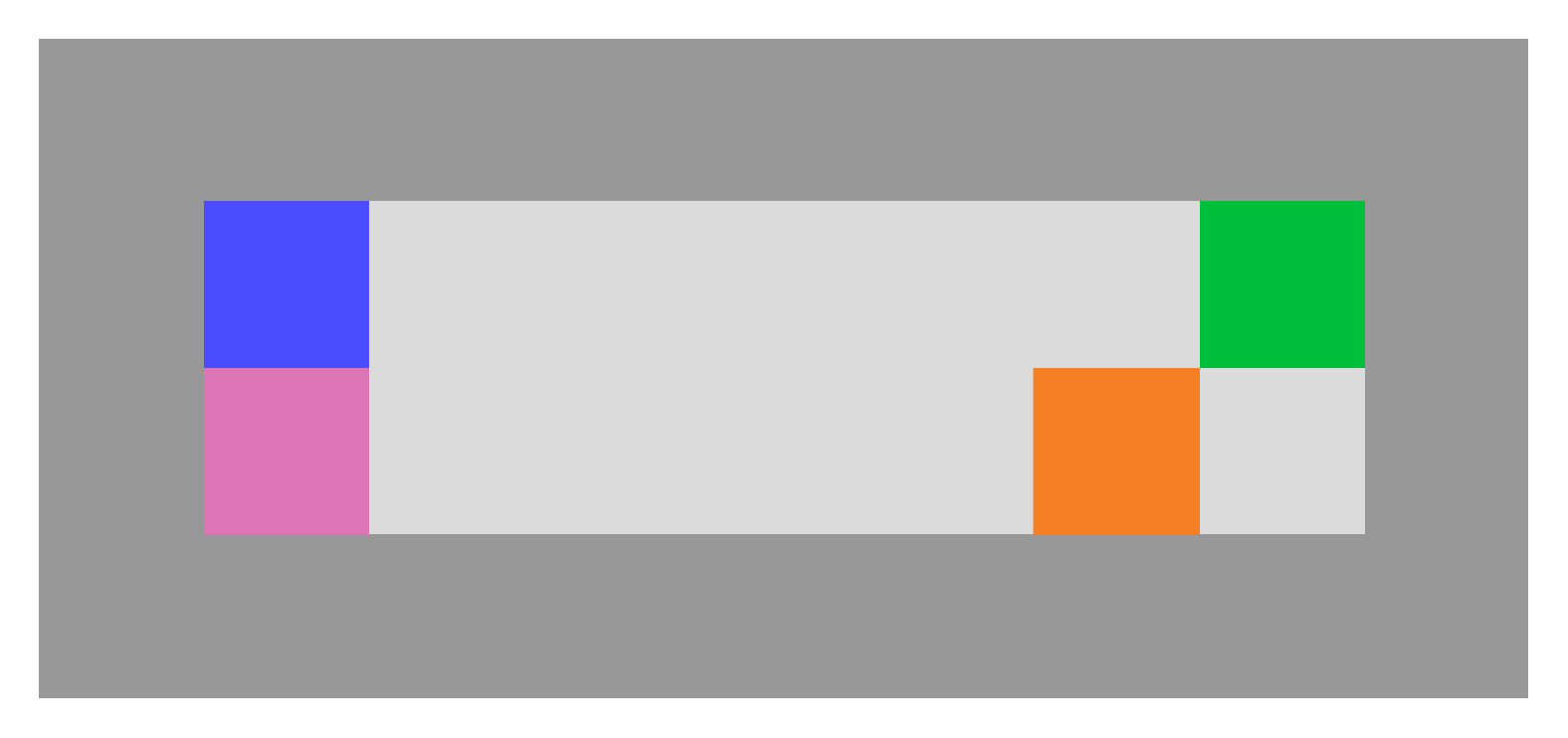}
\label{fig:interference}}
\definecolor{options}{rgb}{.651, .459, .318}
\definecolor{agent}{rgb}{.3, .3, .999}
\definecolor{goal}{rgb}{.012, .753, .235}
\definecolor{off-switch}{rgb}{.65, 0, 0}
\definecolor{living}{rgb}{.863, .455, .714}
\definecolor{other}{rgb}{.961, .502, .145}

\caption{The {\color{agent}agent} should reach the {\color{goal}goal} without having the  side effect of: \protect\subref{fig:options} irreversibly pushing the {\color{options}crate} downwards into the corner (\protect\cite{leike_ai_2017}); \protect\subref{fig:damage} bumping into the horizontally pacing {\color{living}human} (\protect\cite{gavin_leech_preventing_nodate}); \protect\subref{fig:correction}  {\color{off-switch}disabling the off-switch} (if the {\color{off-switch}switch} 
is not disabled within two time steps, the episode ends);
\protect\subref{fig:offset} rescuing the right-moving \textbf{vase}  and then replacing it on the {\color{gray}conveyor belt} (\protect\cite{krakovna_measuring_2018} – note that no {\color{goal}goal} cell is present); \protect\subref{fig:interference} stopping  the left-moving {\color{other}pallet} from reaching the  {\color{living}human} (\protect\cite{gavin_leech_preventing_nodate}).
}
\label{fig:levels}
\end{figure*}
\subsubsection{Delayed Effects}
\label{delayed}

Sometimes the agent disrupts a process which takes multiple time steps to complete, and we would like this to be appropriately penalized. For example, suppose that $s_\text{off}$ is a terminal state representing shutdown, and let $R_\text{on}(s) \vcentcolon = \mathbbm{1}_{s\neq s_\text{off}}$ be the only auxiliary  reward function. Further suppose that if (and only if) the agent does not select $\texttt{disable}$ within the first two time steps, it enters $s_\text{off}$. $ Q_{ R_\text{on}}(s_1, \texttt{disable})=\frac{1}{1-\gamma }$ and $ Q_{ R_\text{on}}(s_1, \varnothing)=\frac{\gamma}{1-\gamma }$, so choosing $\texttt{disable}$ at time step 1 incurs only 1 penalty (instead of the $\frac{1}{1-\gamma }$ penalty induced by comparing with shutdown). 

In general, the  single-step no-op comparison of Equation \ref{penalty} applies insufficient penalty when the increase is induced by the optimal policies of the auxiliary reward functions at the next time step. One solution is to use a model to compute rollouts. For example, to evaluate the delayed effect of choosing $\texttt{disable}$, compare the Q-values at the leaves in Figure \ref{fig:rollout}. The agent remains active in the left branch, but is shut down in the right branch; this induces a substantial penalty.

\section{Experimental Design}
We compare AUP and several of its ablated variants against relative reachability (\cite{krakovna_measuring_2018})  and standard Q-learning within the environments of Figure \ref{fig:levels}.
For each environment, $\mathcal{A}=\{\texttt{up}, \texttt{down},\texttt{left},\texttt{right}, \varnothing\}$.  On contact, the agent pushes the crate, removes the human  and the off-switch, pushes the vase, and blocks the pallet. The episode ends after the agent reaches the goal cell,   20 time steps elapse (the time step is not observed by the agent), or the off-switch is not contacted and disabled within two time steps. In \texttt{Correction} (which we introduce), a yellow indicator appears one step before shutdown, and turns red upon shutdown.  In all environments except \texttt{Offset}, the agent observes a primary reward of 1 for reaching the goal. In \texttt{Offset}, a primary reward of 1 is observed for moving downward twice and thereby  rescuing the vase from disappearing upon contact with the eastern wall. 

Our overarching goal is allowing for low regret over the course of the specification game. In service of this goal, we aim to preserve the agent's ability to optimize the correctly specified reward function. To facilitate this, there are two sets of qualitative properties one intuitively expects, and each property has an illustration in the context of the robotic factory assistant.

The first set contains positive qualities, with a focus on correctly penalizing significant shifts in the agent’s ability to be redirected towards the right objective. The agent should maximally preserve options (\texttt{Options}: objects should not be wedged in locations from which extraction is difficult; \texttt{Damage}: workers should not be injured) and allow correction (\texttt{Correction}: if vases are being painted the wrong color, then straightforward correction should be in order). 

The second set contains negative qualities, with a focus on avoiding the introduction of  perverse incentives. The agent should not be incentivized to artificially reduce the measured penalty  (\texttt{Offset}: a car should not be built and then immediately disassembled) or interfere with changes already underway in the world (\texttt{Interference}: workers should not be impeded). 

Each property seems conducive to achieving low regret over the course of the specification process. Accordingly, if the agent has the  side effect detailed in Figure \ref{fig:levels}, an unobserved performance penalty of \textminus $2$ is recorded. By also incorporating the observed primary reward into the performance metric, we evaluate a combination of conservativeness and efficacy. 

%The question of whether the agent had a side effect is independent of whether the effect was desirable. Even if one supposes that e.g. bumping into the human in \texttt{Damage} is desirable, the observed primary reward function is indifferent to this side effect. Therefore, the agent should preserve its ability to bump into the human without actually doing so. If bumping into the human is desirable, the designers can correct  the primary reward function accordingly. 

Each trial, the auxiliary reward functions are randomly selected from $[0,1]^\mathcal{S}$; to learn their complex Q-functions using  tabular Q-learning, the agent explores randomly for the first 4,000 episodes and  $.2$-greedily (with respect to $Q_{R_\text{AUP}}$) for the remaining 2,000. The greedy policy is evaluated at the end of training. $\Call{Scale}{}$ is as defined in Equation \ref{scale}. The default parameters are $\alpha=1,\gamma=.996,  \lambda=.67,$ and $|\mathcal{R}|=30$. We investigate how varying $\gamma$, $\lambda $, and $|\mathcal{R}|$ affects Model-free AUP performance, and conduct an ablation study on design choices.

Relative reachability   has an inaction baseline, decrease-only deviation metric, and an auxiliary set containing the state indicator functions (whose Q-values are clipped to $[0,1]$ to emulate discounted state reachability). To match \cite{krakovna_measuring_2018}'s results, this condition has $\gamma=.996,\lambda =.2$.

All agents except Standard (a normal Q-learner) and Model-free AUP are 9-step optimal discounted planning agents with perfect models. The planning agents (sans Relative  reachability) use Model-free AUP’s learned auxiliary Q-values and share the default $\gamma=.996,\lambda =.67$.  By modifying the relevant design choice in AUP, we have the Starting state, Inaction, and Decrease AUP variants. 

When calculating $\Call{Penalty}{s,a}$,  all planning agents model the auxiliary Q-values resulting from taking action $a$ and then selecting $\varnothing$ until time step 9. Starting state AUP compares these auxiliary Q-values with those of  the starting state. Agents with inaction or stepwise inaction baselines compare with respect to the appropriate no-op rollouts up to time step 9 (see Figures \ref{fig:baseline} and \ref{fig:rollout}).

\begin{figure*}[t]
\centering

\includegraphics[width=\textwidth]{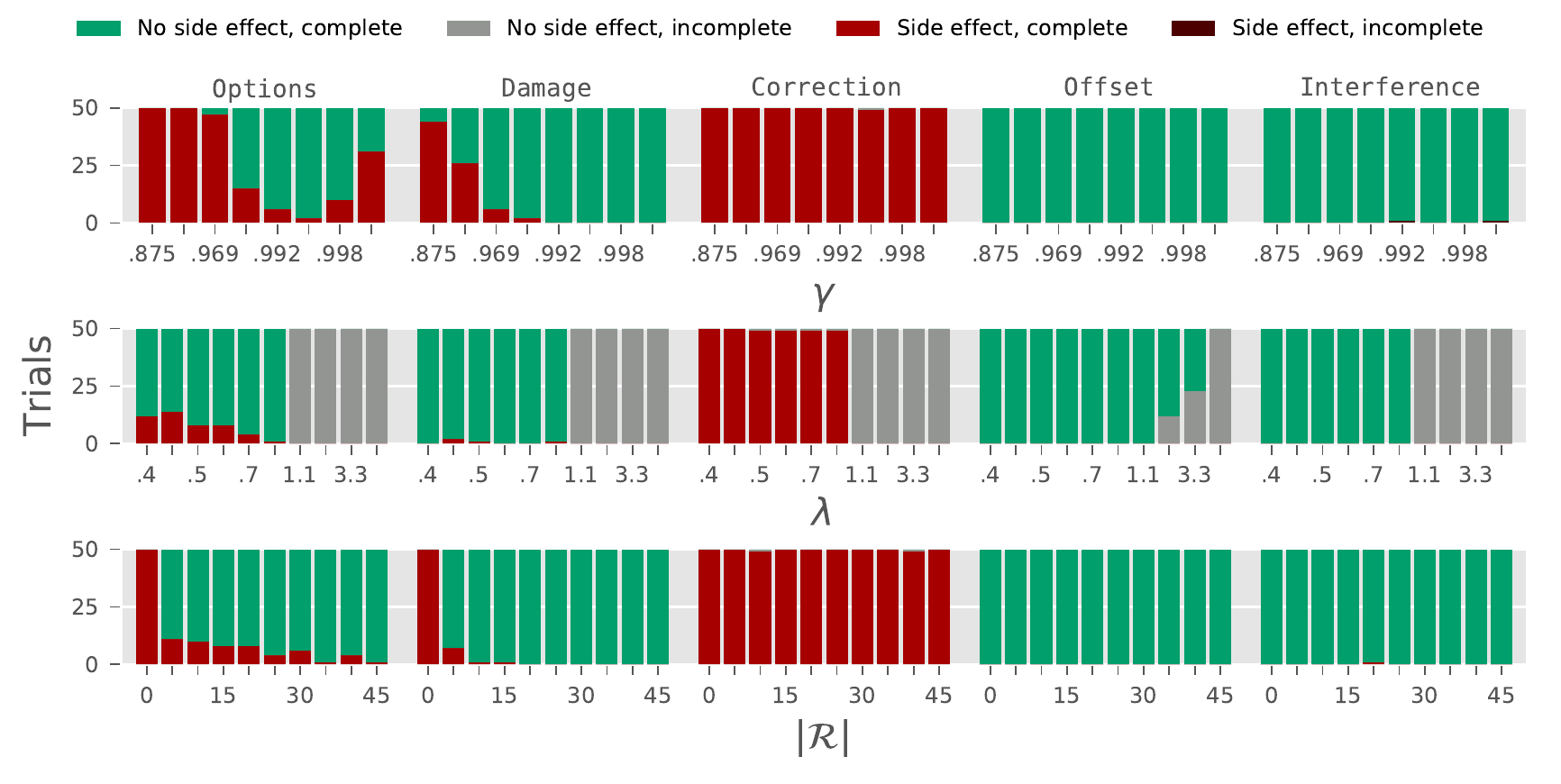}
\caption{Outcome tallies for Model-free AUP across parameter settings.  ``Complete'' means the agent accrued the primary reward. In \texttt{Correction}, reaching the goal is  mutually exclusive with not disabling the off-switch, so ``no side effect, incomplete'' is the best outcome.}
\label{fig:results}
\end{figure*}
\begin{figure}
\centering 
\includegraphics[width=.492\textwidth]{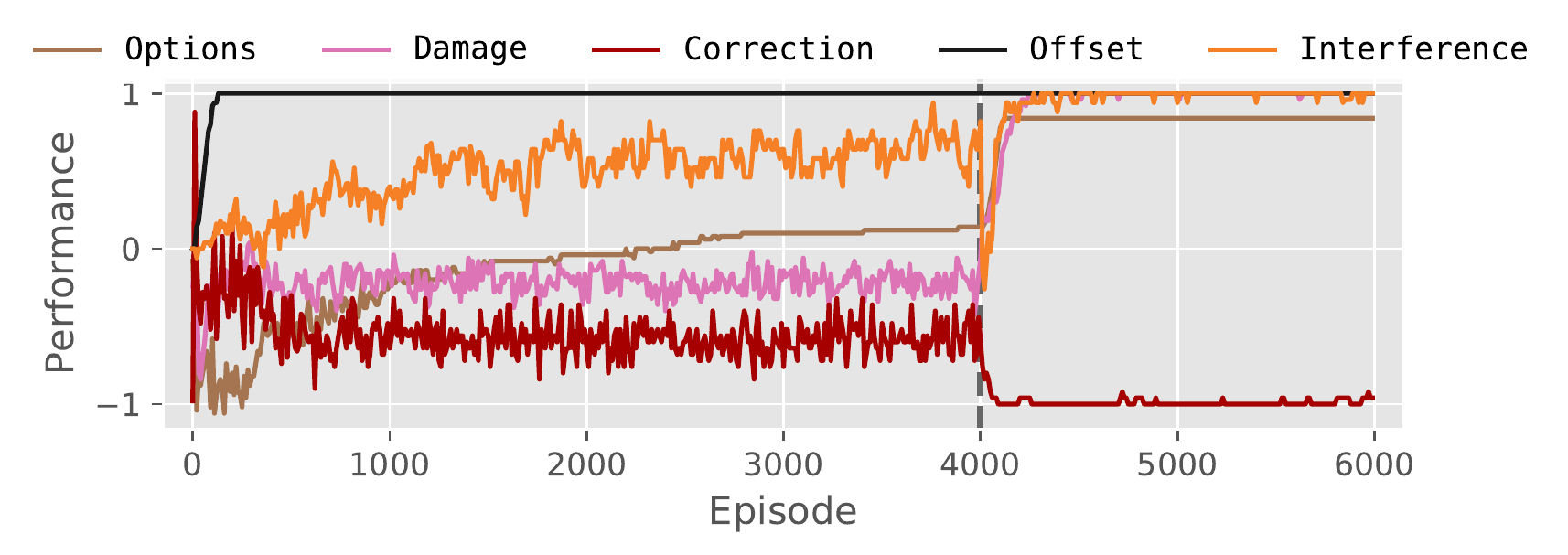}

\caption{Model-free AUP performance averaged over 50 trials. The performance combines the observed primary reward of $1$ for completing the objective, and the unobserved  penalty of \textminus $2$ for having the  side effect in Figure \ref{fig:levels}. The dashed vertical line marks the shift in exploration strategy.} 
\label{fig:episodes}
\end{figure}
\section{Results}

\subsection{Model-free AUP}
Model-free AUP fails \texttt{Correction} for the reasons discussed in \ref{delayed}: Delayed effects.\footnote{Code and animated results available at \url{https://github.com/alexander-turner/attainable-utility-preservation}.}

As shown in Figure \ref{fig:results}, low $\gamma$ values induce a substantial movement penalty, as the auxiliary Q-values are  sensitive to the immediate surroundings. The optimal value for \texttt{Options} is $\gamma \approx .996$, with performance decreasing as $\gamma \to 1$ due to increasing sample complexity for learning the auxiliary Q-values.

In \texttt{Options}, small values of $\lambda $ begin to induce side effects as the scaled penalty shrinks. One can seemingly decrease $\lambda$ until effective behavior is achieved, reducing the risk of deploying an insufficiently conservative agent. 

Even though $\mathcal{R}$ is randomly generated and the environments are different, $\Call{Scale}{}$ ensures that when $\lambda> 1$, the agent never ends the episode by reaching the goal. None of the auxiliary reward functions can be optimized after the agent ends the episode, so the auxiliary Q-values are all zero and $\Call{Penalty}{}$  computes the total ability to optimize the auxiliary set – in other words, the $\Call{Scale}{}$ value. The $R_\text{AUP}$-reward for reaching the goal is then $1-\lambda $. 

If the optimal value functions for most reward functions were not correlated, then  one would expect to randomly generate an enormous number of auxiliary reward functions before sampling one resembling ``don't have side effects''. However, merely \textit{five} sufficed. This supports the notion that these value functions \textit{are} correlated, which agrees with the informal intuitions discussed earlier.

\begin{table*}
\centering
\vspace{3pt}
\rowcolors{2}{gray!25}{white}
\begin{tabular}{@{}rccccc@{}}
\toprule
& \texttt{Options} & \texttt{Damage} & \texttt{Correction} & \texttt{Offset} & \texttt{Interference} \\ \midrule
AUP & \cmark  & \cmark  & \cmark & \cmark & \cmark \\
Relative reachability  & \cmark  & \cmark & \xmark & \xmark & \cmark \\
Standard & \xmark  & \xmark & \xmark & \cmark & \cmark \\

Model-free AUP  & \cmark  & \cmark & \xmark & \cmark & \cmark \\

Starting state AUP  & \cmark  & \cmark & \xmark & \cmark & \xmark \\
Inaction AUP & \cmark  & \cmark & \cmark & \xmark & \cmark \\
Decrease AUP  & \cmark  & \cmark  & \xmark & \cmark & \cmark\\

\end{tabular}
\caption{Ablation results;\, \cmark\, for achieving the best outcome (see Figure \ref{fig:results}), \,\xmark\, otherwise.}
\label{tab:ablation}
\end{table*}

\subsection{Ablation}
The results are presented in Table \ref{tab:ablation} due to the binary nature of performance at appropriate settings, and were not sensitive to the rollout length (as long as it allowed for relevant interaction with the environment).

Standard moves directly to the goal, pushing the crate into the corner in \texttt{Options} and bumping into the human in \texttt{Damage}. 

Model-free and Starting state AUP fail \texttt{Correction} for the same reason (see \ref{delayed}: Delayed effects), refraining from disabling the off-switch only when $\lambda >1$. Relative reachability and Decrease AUP fail because avoiding shutdown doesn't decrease the auxiliary Q-values.

Relative reachability and Inaction AUP's poor performance in \texttt{Offset} stems from the inaction baseline (although \cite{krakovna_measuring_2018} note that relative reachability passes using \textit{undiscounted} state reachabilities). Since the vase falls off the conveyor belt in the inaction rollout, states in which the vase is intact have different auxiliary Q-values. To avoid continually incurring penalty after receiving the primary reward for saving the vase, the agents replace the vase on the belt so that it once again breaks.

By taking positive action to stop the pallet in \texttt{Interference}, Starting state AUP shows that poor design choices create perverse incentives.

\section{Discussion}

\texttt{Correction} suggests that AUP agents are significantly easier to correct. Since the agent is unable to optimize objectives if shut down, avoiding shutdown significantly changes the ability to optimize
almost every objective. AUP seems to naturally incentivize passivity, without requiring e.g. assumption of a correct parametrization of human reward functions (as does the approach of \cite{hadfield2016cooperative}, which \cite{carey_incorrigibility_2017} demonstrated). 

Although we only ablated AUP, we expect that, equipped with our design choices of stepwise baseline and absolute value deviation metric, relative reachability would also pass all five environments. The case for this is made  by considering the performance of Relative reachability, Inaction AUP, and Decrease AUP. This suggests that AUP's improved performance is due to better design choices. However, we anticipate that AUP offers more than robustness against random auxiliary sets.

Relative reachability computes state reachabilities between all $|\mathcal{S}|^{2} $ pairs of states. In contrast, AUP only requires the learning of Q-functions and should therefore scale  relatively smoothly. We speculate that in partially observable environments, a small sample of somewhat task-relevant auxiliary reward functions induces conservative behavior. 

For example, suppose we   train an agent to handle vases, and then to clean, and then to make  widgets with the  equipment. Then, we deploy an AUP agent with a more ambitious primary objective and the learned Q-functions of the aforementioned auxiliary objectives. The agent would apply penalties to modifying vases, making messes, interfering with equipment, and anything else bearing on the auxiliary objectives. 

Before AUP, this could only be achieved by e.g. specifying penalties for the litany of individual side effects or providing negative feedback after each mistake has been made (and thereby confronting a credit assignment problem). In contrast, once provided the Q-function for an auxiliary objective, the AUP agent becomes sensitive to all events relevant to that objective, applying penalty proportional to the relevance.

\section{Conclusion}
This work is rooted in twin insights: that the reward specification process can be viewed as an iterated game, and that preserving the ability to optimize arbitrary objectives often preserves the ability to optimize the unknown correct objective. To achieve low regret over the course of the game, we can design conservative agents which optimize the primary objective while preserving their ability to optimize auxiliary objectives. We demonstrated how AUP agents act both conservatively and effectively while exhibiting a range of desirable qualitative properties. 

Given our current reward specification abilities, misspecification may be inevitable, but it need not be disastrous.

\section*{Acknowledgments} This work was supported by the Center for Human-Compatible AI and the Berkeley Existential Risk Initiative. We thank Thomas Dietterich, Alan Fern, Adam Gleave, Victoria Krakovna, Matthew Rahtz, and Cody Wild for their feedback, and are grateful for the preparatory assistance of Phillip Bindeman, Alison Bowden, and Neale Ratzlaff.

\appendix 
\section{Theoretical Results}
\label{app:convergence}
Consider an MDP $\langle \mathcal{S},\mathcal{A},T,R,\gamma\rangle$ whose state space $\mathcal{S}$ and action space $\mathcal{A}$ are both finite,
with $\varnothing\in\mathcal{A}$. Let $\gamma\in [0,1)$, $\lambda \geq 0$, and consider finite $\mathcal{R}\subset\mathbb{R}^{\mathcal{S}\times \mathcal{A}}$. 

We make the standard assumptions of an exploration policy greedy in the limit of infinite exploration and a learning rate schedule with infinite sum but finite sum of squares. Suppose $\Call{Scale}{}:\mathcal{S}\to\mathbb{R}_{>0}$ converges in the limit of Q-learning. $\Call{Penalty}{s,a}$ (abbr. $\Call{Pen}{}$), $\Call{Scale}{s}$ (abbr. $\Call{Sc}{}$), and $R_\text{AUP}(s,a)$ are understood to be calculated with respect to the $Q_{R_i}$ being learned online; $\Call{Pen*}{}$, $\Call{Sc*}{}$, $R_\text{AUP}^*$, and $Q^*_{R_i}$ are taken to be their limit counterparts.

\begin{restatable}[]{lem}{pen}
$\forall s,a:\Call{Penalty}{}$ converges with probability 1.
\end{restatable}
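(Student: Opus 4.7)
The plan is to reduce the claim to the classical convergence of tabular Q-learning applied independently to each auxiliary reward function, then invoke continuity of the $\Call{Penalty}{}$ expression as a finite combination of those Q-values.

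First, I would observe that Algorithm \ref{alg:update} runs an independent tabular Q-learning update for each $R_i \in \mathcal{R}$, and that each of these updates is unaffected by the $R_\text{AUP}$ update (their targets only reference $Q_{R_i}$ itself, not $\Call{Penalty}{}$ or $\Call{Scale}{}$). Since $\mathcal{S}$ and $\mathcal{A}$ are finite, $\gamma \in [0,1)$, the rewards $R_i$ are bounded, the learning-rate schedule satisfies the Robbins--Monro conditions ($\sum_t \alpha_t = \infty$, $\sum_t \alpha_t^2 < \infty$), and the exploration policy is greedy in the limit with infinite exploration, the standard convergence result for tabular Q-learning (Watkins and Dayan, 1992; Jaakkola, Jordan and Singh, 1994) applies: for each $i$, $Q_{R_i}(s,a) \to Q^*_{R_i}(s,a)$ with probability $1$ for every $(s,a) \in \mathcal{S}\times \mathcal{A}$.

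Next, because $|\mathcal{R}|$ is finite, the intersection of the $|\mathcal{R}|\cdot|\mathcal{S}|\cdot|\mathcal{A}|$ individual probability-$1$ convergence events is still a probability-$1$ event. On this event, I would then apply the continuous mapping theorem: the map
\begin{equation*}
(q_1,\dots,q_{|\mathcal{R}|}) \mapsto \sum_{i=1}^{|\mathcal{R}|} \bigl| q_i(s,a) - q_i(s,\varnothing)\bigr|
\end{equation*}
is continuous in its arguments, so termwise convergence of the $Q_{R_i}$'s yields $\Call{Penalty}{s,a} \to \Call{Pen*}{s,a}$ for every $(s,a)$, where $\Call{Pen*}{s,a} := \sum_{i=1}^{|\mathcal{R}|} |Q^*_{R_i}(s,a) - Q^*_{R_i}(s,\varnothing)|$.

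There is no real obstacle here beyond bookkeeping: the only subtlety is verifying that the auxiliary Q-learning recursions truly satisfy the hypotheses of the classical theorem despite being run in parallel with the $R_\text{AUP}$ update. That check is immediate because the auxiliary updates do not depend on $Q_{R_\text{AUP}}$, so they reduce to $|\mathcal{R}|$ independent instances of standard Q-learning on the underlying MDP $\langle \mathcal{S},\mathcal{A},T,R_i,\gamma\rangle$. This same structure is what will be reused (together with convergence of $\Call{Scale}{}$) to deduce Lemma \ref{lem:r-aup} and in turn Theorem \ref{thm:convergence}.
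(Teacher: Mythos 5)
Your proposal is correct and follows essentially the same route as the paper: both reduce the claim to Watkins-style convergence of each auxiliary $Q_{R_i}$ and then pass the limit through the finite sum defining $\Call{Penalty}{}$ (the paper does this via an explicit $\tfrac{\epsilon}{2|\mathcal{R}|}$ triangle-inequality bound rather than citing the continuous mapping theorem, but the content is identical). Your added remarks on intersecting the finitely many probability-one events and on the auxiliary updates being independent of the $R_\text{AUP}$ update are sound bookkeeping that the paper leaves implicit.
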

\begin{proof}[Proof outline]Let $\epsilon >0$, and suppose for all $R_i\in\mathcal{R}$, $\max_{s,\, a}|Q^*_{R_i}(s,a)-Q_{R_i}(s,a)|<\frac{\epsilon}{2|\mathcal{R}|}$ (because Q-learning converges; see \cite{watkins1992q}).

\begin{align}&\max_{s,\,a}\left|\Call{Penalty*}{s,a}-\Call{Penalty}{s,a}\right|\\
%=&\max_{s,\,a}\left|\sum_{i=1}^{|\mathcal{R}|} &|Q^*_{R_i}(s,a)-Q^*_{R_i}(s,\varnothing)|\\& &-|Q_{R_i}(s,a)-Q_{R_i}(s,\varnothing)|\right|\\
%\leq&\max_{s,\,a}\sum_{i=1}^{|\mathcal{R}|}\left|Q^*_{R_i}(s,a)-Q_{R_i}(s,a)+Q_{R_i}(s,\varnothing)-Q^*_{R_i}(s,\varnothing)\right|\\
\begin{split}
\leq&\max_{s,\,a}\sum_{i=1}^{|\mathcal{R}|}\left|Q^*_{R_i}(s,a)-Q_{R_i}(s,a)\right|+\\
&\hphantom{\max_{s,\,a}\sum_{i=1}^{|\mathcal{R}|}}\,\,\left|Q^*_{R_i}(s,\varnothing)-Q_{R_i}(s,\varnothing)\right|
\end{split}\\
<&\;\epsilon. 
\end{align}
\end{proof}

The intuition for Lemma \ref{lem:r-aup} is that since $\Call{Penalty}{}$ and $\Call{Scale}{}$ both converge, so must $R_\text{AUP}$. For readability, we suppress the arguments to $\Call{Penalty}{}$ and $\Call{Scale}{}$.

\raup
\begin{proof}[Proof outline] If $\lambda =0$, the claim follows trivially.

Otherwise, let $\epsilon > 0$,  $B \vcentcolon = \max_{s,\,a}\,  \Call{Sc*}{} + \Call{Pen*}{}$, and $C \vcentcolon = \min_{s,\,a} \, \Call{Sc*}{}$. Choose any $\epsilon_R \in \left(0, \min \left[C, \dfrac{\epsilon \, C^2}{\lambda B + \epsilon \, C}\right] \right)$ and assume $\Call{Pen}{}$ and $\Call{Sc}{}$ are both $\epsilon_R$-close.

\begin{align}
&\max_{s,\,a} |R_\text{AUP}^*(s, a) - R_\text{AUP}(s, a)| \\
=&\max_{s,\,a} \lambda \left| \frac{\Call{Pen}{}}{\Call{Sc}{}} - \frac{\Call{Pen*}{}}{\Call{Sc*}{}} \right|\\
=&\max_{s,\,a} \lambda \, \frac{\left|\Call{Pen}{}\cdot\Call{Sc*}{} - \Call{Sc}{}\cdot\Call{Pen*}{}\right|}{ \Call{Sc*}{}\cdot\Call{Sc}{} }  \\
<&\max_{s,\,a} \lambda \, \frac{\left|(\Call{Pen*}{} + \epsilon_R)\Call{Sc*}{} - (\Call{Sc*}{}-\epsilon_R)\Call{Pen*}{}\right|}{C\,(\Call{Sc*}{} - \epsilon_R)}   \\
\leq&\;\frac{\lambda B}{C} \cdot \frac{\epsilon_R}{C - \epsilon_R} \\
<&\;\frac{\lambda B}{C} \cdot \frac{\epsilon \, C^2}{(\lambda B+\epsilon \, C)(C-\frac{\epsilon \, C^2}{\lambda B+\epsilon \, C})} \\
<&\;\frac{\lambda B}{C} \cdot \frac{\epsilon \, C^2}{\lambda B(C-\frac{\epsilon \, C^2}{\lambda B+\epsilon \, C})} \\
<&\;\frac{\epsilon}{1-\frac{\epsilon \, C}{\lambda B+\epsilon \, C}} \\
=&\;\epsilon \left( 1 + \frac{\epsilon \, C}{\lambda B}\right).\label{eq:pf_AUP} 
\end{align}

\noindent But $B, C, \lambda $ are constants, and $\epsilon$ was arbitrary; clearly $\epsilon' > 0$ can be substituted such that $(\ref{eq:pf_AUP}) < \epsilon$.\end{proof} 
\qconv*
\begin{proof}[Proof outline] Let $\epsilon > 0$, and suppose $R_\text{AUP}$ is $\frac{\epsilon (1 - \gamma)}{2} $-close. 
Then Q-learning on $R_\text{AUP}$ eventually converges to a limit $\tilde{Q}_{R_\text{AUP}}$ such that $\max_{s,\,a} |Q^*_{R_\text{AUP}}(s, a) - \tilde{Q}_{R_\text{AUP}}(s, a)| < \frac{\epsilon}{2}$. 
By the convergence of Q-learning, we also eventually have $\max_{s,\,a} |\tilde{Q}_{R_\text{AUP}}(s, a) - Q_{R_\text{AUP}}(s, a)| < \frac{\epsilon}{2}$. Then 
 
\begin{align}
& \max_{s,\,a} \left|Q^*_{R_\text{AUP}}(s, a) - Q_{R_\text{AUP}}(s, a)\right| < \epsilon.%\\
%\begin{split}
%= &\max_{s,\,a} \bigg| \left(Q^*_\text{AUP}(s, a) - \widetilde{Q}_{R_\text{AUP}} (s, a)\right) +\\ &\hphantom{\max_{s,\,a} \Big|}\;\left(\widetilde{Q}_{R_\text{AUP}} (s, a) - Q_{R_\text{AUP}}(s, a)\right) \bigg| 
%\end{split}\\
% <&  \;\epsilon. 
\end{align}
\end{proof}
%\vspace{30pt}
\begin{prop}[Invariance properties]

    Let $c \in \mathbb{R}_{>0}, b \in \mathbb{R}$.
    \begin{enumerate}
    \item[a)] Let $\mathcal{R}'$ denote the set of functions induced by the positive affine transformation $cX + b$ on $\mathcal{R}$,    
    %$\mathcal{R}':= \{R'_i(s, a) = a R_i(s, a) + b \,\,\, : R_i \in \mathcal{R}\}$, 
    and take $\Call{Pen*}{}_{\mathcal{R}'}$ to be calculated with respect to attainable set $\mathcal{R}'$. Then $\Call{Pen*}{}_{\mathcal{R}'} = c \, \Call{Pen*}{}_{\mathcal{R}}$. 
    In particular, when $\Call{Sc*}{}$ is a $\Call{Penalty}{}$ calculation, $R^*_{\text{AUP}}$ is invariant to positive affine transformations of $\mathcal{R}$.   \\
    \item[b)] Let $R':= c R + b$, and take $R'^*_{\text{AUP}}$ to incorporate $R'$ instead of $R$.
    Then by multiplying $\lambda $ by $c$, the induced optimal policy remains invariant. 
    \end{enumerate}
\end{prop}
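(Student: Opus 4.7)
The plan rests on the well-known linearity of optimal Q-values in the reward function: for any $R_i$ and positive affine transform $R_i' = cR_i + b$ with $c > 0$, the Bellman fixed-point equation yields $Q^*_{R_i'}(s,a) = c\,Q^*_{R_i}(s,a) + \tfrac{b}{1-\gamma}$, and the induced optimal policies coincide (since a uniform constant does not affect any $\argmax$ and a positive scaling preserves order). The paper already appeals informally to this fact in its discussion of the $\Call{Scale}{}$ normalization, and I would either cite it or give a one-line Bellman-equation derivation; once it is in hand, both parts of the proposition reduce to bookkeeping.

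For part (a), I would apply the linearity termwise. For each $R_i \in \mathcal{R}$ with transformed counterpart $R_i' \in \mathcal{R}'$, the additive offset $\tfrac{b}{1-\gamma}$ cancels in the difference, giving $Q^*_{R_i'}(s,a) - Q^*_{R_i'}(s,\varnothing) = c\left(Q^*_{R_i}(s,a) - Q^*_{R_i}(s,\varnothing)\right)$. Taking absolute values (legal because $c > 0$) and summing over $\mathcal{R}'$ immediately yields $\Call{Pen*}{}_{\mathcal{R}'} = c \cdot \Call{Pen*}{}_{\mathcal{R}}$, which is the first claim. If in addition $\Call{Sc*}{}$ is itself a $\Call{Penalty}{}$-style quantity — for example, the penalty of a designated mild action relative to $\varnothing$ — the same cancellation shows that $\Call{Sc*}{}$ also scales by $c$, so the ratio $\Call{Pen*}{}/\Call{Sc*}{}$ is invariant and hence $R_\text{AUP}^*$ is invariant.

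For part (b), I would substitute $R' = cR + b$ and regularizer $c\lambda$ directly into Equation~(\ref{eq:aup}):
\[
R'^*_\text{AUP}(s,a) = cR(s,a) + b - c\lambda\,\frac{\Call{Pen*}{}(s,a)}{\Call{Sc*}{}(s)} = c\,R^*_\text{AUP}(s,a) + b.
\]
Thus $R'^*_\text{AUP}$ is itself a positive affine transformation of $R^*_\text{AUP}$, and the opening observation gives equality of the induced optimal policies.

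The argument is essentially bookkeeping; the one step worth flagging is the hypothesis ``when $\Call{Sc*}{}$ is a $\Call{Penalty}{}$ calculation'' in part (a). This hypothesis is exactly what forces the $\tfrac{b}{1-\gamma}$ offsets to cancel in the denominator. For the alternative $\Call{Scale}{}$ in Equation~(\ref{scale}), which sums raw no-op Q-values, the additive $b$ genuinely shifts $\Call{Sc*}{}$ by $|\mathcal{R}|\tfrac{b}{1-\gamma}$, so only the scale part of the invariance (i.e., $b = 0$) survives. I would state this caveat explicitly so that readers do not over-apply the invariance to the unnormalized scaling choice.
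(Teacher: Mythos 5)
Your argument is correct and follows essentially the same route as the paper: both rest on the identity $Q^*_{cR_i+b} = c\,Q^*_{R_i} + \frac{b}{1-\gamma}$, substitute it into the $\Call{Penalty}{}$ definition so the offsets cancel for part (a), and for part (b) observe that $R'^*_{\text{AUP}} = c\,R^*_{\text{AUP}} + b$ is a positive affine transformation preserving the optimal policy. Your closing caveat — that the additive offset does \emph{not} cancel under the unnormalized $\Call{Scale}{}$ of Equation \ref{scale}, so full invariance there requires $b=0$ — is a correct and worthwhile clarification of why the proposition restricts to the case where $\Call{Sc*}{}$ is itself a $\Call{Penalty}{}$ calculation.
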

    
\begin{proof}[Proof outline]
    For a), since the optimal policy is invariant to positive affine transformation of the reward function, for each $R'_i \in \mathcal{R}'$ we have $Q^*_{R'_i} = c \, Q^*_{R_i} + \frac{b}{1-\gamma}$. Substituting into Equation \ref{penalty} ($\Call{Penalty}{}$), the claim follows.\\

    \noindent For b), we again use the above invariance of optimal policies:
    \begin{eqnarray}
        R'^*_{\text{AUP}} &:=& c R + b - c\lambda \,\frac{\Call{Pen*}{}}{\Call{Sc*}{}} \\
        &=& c R^*_{\text{AUP}} + b.
    \end{eqnarray}

\end{proof}

\bibliographystyle{named}
\bibliography{AI_Safety.bib}

\end{document}